\documentclass{article}

\PassOptionsToPackage{numbers, compress}{natbib}
 \usepackage[preprint]{neurips_2026}

\usepackage[utf8]{inputenc} 
\usepackage[T1]{fontenc}    
\usepackage{hyperref}       
\usepackage{url}            
\usepackage{booktabs}       
\usepackage{amsfonts}       
\usepackage{nicefrac}       
\usepackage{microtype}      
\usepackage{xcolor}         
\usepackage{amsmath} 
\usepackage{amssymb}
\usepackage{graphicx}
\usepackage{wrapfig}
\usepackage{tcolorbox}    
\usepackage{caption}
\usepackage{colortbl}
\usepackage{enumitem}
\usepackage{lineno}
\usepackage{algorithm}
\usepackage{algpseudocode}

\usepackage{amsthm}

\definecolor{rowgreen}{HTML}{F1F8F2}
\definecolor{upgreen}{HTML}{2E7D32}
\definecolor{downorg}{HTML}{D4A017}
\definecolor{rowblue}{HTML}{E8F0FE}

\newcommand{\oursrowa}{\rowcolor{rowgreen}}
\newcommand{\oursrowb}{\rowcolor{rowblue}}
\newcommand{\inc}[1]{{\scriptsize\textcolor{upgreen}{$_{\uparrow#1}$}}}

\definecolor{phaseblue}{RGB}{30,100,180}
\newcommand{\stepcomment}[1]{\textcolor{phaseblue}{\textit{#1}}}

\usepackage{tcolorbox}
\tcbuselibrary{breakable}

\newcommand{\n}{\textsc{OPPO}}

\newtheorem{proposition}{Proposition}

\newtcolorbox{modelingchoice}[1][]{
  colback=blue!2,
  colframe=blue!2,
  boxrule=0pt,
  arc=2pt,
  outer arc=2pt,
  top=4pt, bottom=4pt,
  left=8pt, right=8pt,
  fonttitle=\small\sffamily\bfseries,
  coltitle=blue!50!black,
  title={#1},
  breakable
}

\title{OPPO: Bayesian Value Recursion for Token-Level Credit Assignment in LLM Reasoning}

\author{%
  Yu Li$^{1}$ \quad Rui Miao$^{2}$ \quad Tian Lan$^{1}$ \quad Zhengling Qi$^{1}$ \\
  $^{1}$George Washington University \quad $^{2}$The University of Texas at Dallas
}

\begin{document}

\maketitle

\begin{abstract}
Reinforcement learning with verifiable rewards has become the standard recipe for improving LLM reasoning, but the dominant algorithm GRPO assigns a single trajectory-level advantage to every token, diluting the signal at pivotal reasoning steps and injecting noise at uninformative ones. Critic-free alternatives derived from on-policy distillation supply per-token signals through oracle-conditioned likelihood ratios, yet apply each signal in isolation from the trajectory-level evidence accumulated up to that position. We propose Oracle-Prompted Policy Optimization (\n), which rests on a single observation: the oracle signal used by prior distillation-style methods for local discrimination is also the natural Bayesian update of the model's belief about eventual success. Accumulating the signal along a trajectory yields, in closed form and at the cost of one extra forward pass, a running estimate of the success probability at every position, together with a token-level advantage that requires no learned value network and no additional rollouts. A first-order analysis factorizes the advantage into the per-token discrimination signal used by distillation methods modulated by a state weight that concentrates credit on genuinely pivotal tokens, with a directional variance-reduction guarantee. The framework admits two estimators differing only in which model scores the evidence: a \textit{self-oracle} that reuses the student and recovers the on-policy distillation reward as a strict special case, and a \textit{teacher-oracle} that delegates scoring to a stronger frozen model. On two base LLMs across seven mathematics, science, and code reasoning benchmarks, \n\ improves over GRPO, DAPO, and SDPO by up to $+6.0$ points on AMC'23 and $+5.2$ points on AIME'24, with gains that widen monotonically with response length.
\end{abstract}
\vspace{-2mm}
\section{Introduction}
\label{sec:intro}
\vspace{-2mm}
Reinforcement learning with verifiable rewards (RLVR) has become a central paradigm for improving the reasoning capabilities of large language models~\citep{guo2025deepseek,yang2025qwen3,jaech2024openai}. In the standard pipeline, a model generates a complete reasoning trajectory, receives a binary reward based on the correctness of the final answer, and updates its policy to increase the likelihood of generating correct trajectories. Group Relative Policy Optimization (GRPO)~\citep{guo2025deepseek} is the most widely adopted on-policy algorithm in the paradigm, estimating the advantage of each trajectory relative to a group of sampled responses and feeding the signal into a clipped policy gradient update.

GRPO broadcasts a single scalar advantage to every token position~\citep{liu2025understanding}. In a reasoning chain of several hundred tokens, only a small subset of tokens typically determine the outcome, namely those corresponding to a strategy selection, a key algebraic step, or a critical logical deduction~\citep{patil2025advancing,sun2025survey}. By assigning equal credit to every position, GRPO dilutes the learning signal at pivotal tokens and injects noise through undeserved gradient updates at uninformative ones, and the problem worsens as reasoning chains grow longer~\citep{yu2025dapo,jain2025towards}.

Recent work has pursued two strategies. The first refines the GRPO framework while retaining the trajectory-level advantage. Dr.GRPO~\citep{liu2025understanding}, DAPO~\citep{yu2025dapo}, GSPO~\citep{zheng2025group}, and GMPO~\citep{zhao2025geometric} modify the importance ratio, clipping, or advantage normalization, yet every token still receives the same advantage~\citep{guo2025segment}. The second seeks finer-grained credit at the token level. Monte Carlo resampling methods such as VinePPO~\citep{kazemnejad2024vineppo} and TreeRL~\citep{hou2025treerl} require orders of magnitude more inference, implicit process reward models such as PRIME~\citep{cui2025process} require online updating, and segment-level aggregation in SPO~\citep{guo2025segment} pools descendant outcomes within shared prefixes. Most relevant to the present work, on-policy distillation methods with privileged information~\citep{agarwal2024policy,gu2023minillm,lu2025onpolicydistillation,yang2026self,hubotter2026reinforcement} condition the model on the ground-truth answer $y^\ast$ and use the resulting log-likelihood ratio $\log\lambda_t = \log\pi_{\mathrm{teacher}}(y_t \mid s_t) - \log\pi_{\mathrm{student}}(y_t \mid s_t)$ as a token-level credit signal, where $\log\lambda_t$ is the one-step expert log-ratio between the success-conditional optimal policy $\pi^\ast$ and the student. Every such method, however, applies $\log\lambda_t$ with uniform weight across positions and treats each token's credit as an isolated one-step quantity decoupled from the cumulative evidence along $y_{<t}$.

Motivated by this gap, we propose Oracle-Prompted Policy Optimization (\n), which rests on a single observation: the per-token oracle signal used in distillation-style methods is also the natural Bayesian update of the model's belief about eventual success. Accumulating the signal along a trajectory yields, in closed form and at no extra rollout cost, a running estimate of the success probability at every position, and from it a token-level advantage that requires no learned value network and sums exactly to the trajectory's outcome relative to its prior. The recursion admits two estimators that differ only in which model scores the evidence: a self-oracle reuses the student in one extra forward pass and recovers the on-policy distillation reward as a strict special case, while a teacher-oracle delegates scoring to a stronger frozen model. The advantage factorizes as the per-token discrimination signal modulated by a state weight $V_t(1-V_t)$ that peaks where the running belief is most uncertain and vanishes once the trajectory has committed, concentrating credit on pivotal tokens without any tunable hyperparameter. A direction-anchoring step ties the sign of each token-level advantage to the sequence-level outcome, preserving the verifier as the sole arbiter of which trajectories are reinforced while letting the Bayesian machinery redistribute credit within each trajectory.

Our contributions are as follows.
\begin{itemize}[leftmargin=*,topsep=2pt,itemsep=2pt,parsep=0pt]
    \item Showing that the OPD per-token reward $\log\lambda_t$ is simultaneously an expert log-ratio surrogate for $A_t^{\pi^\ast}$ and the Bayesian sufficient statistic for $V_t$, with the choice of probability measure cleanly selecting the self-oracle or teacher-oracle estimator.
    \item Deriving a closed-form Bayesian value recursion that produces a token-level advantage with no learned critic and no extra rollouts, satisfies the telescoping budget $\sum_t A_t = R - V_0$, and admits the factorization $A_t \approx V_t(1-V_t)\log\lambda_t$ with a directional variance-reduction guarantee.
    \item Integrating the recursion into the GRPO pipeline via direction anchoring and evidence clipping at the cost of one extra forward pass per trajectory, with identical downstream computation under either oracle.
    \item Evaluating the method on Qwen3 and Phi reasoning models across seven reasoning benchmarks, where the proposed framework consistently surpasses representative trajectory-level and oracle-conditioned baselines, with up to $+5.7$ points on AMC'23 and $+5.2$ points on AIME'24 under Teacher-\n\ on DAPO and gains that widen monotonically with response length.
\end{itemize}
\vspace{-3mm}
\section{Preliminaries}\label{sec:prelim}

\subsection{Reinforcement Learning for LLM Reasoning}
We model autoregressive LLM generation as a finite-horizon Markov decision process $(\mathcal{S}, \mathcal{V}, \pi_\theta, R)$ over a finite vocabulary $\mathcal{V}$. Given a query $x$, the state at step $t$ is the prefix $s_t \triangleq (x, y_{<t})$ with $y_{<t} \triangleq (y_1, \dots, y_{t-1})$ and $s_1 = (x)$. The policy $\pi_\theta(\cdot \mid s_t) \in \Delta(\mathcal{V})$ samples a token $y_t$, yielding the deterministic transition $s_{t+1} = (s_t, y_t)$. Generation terminates at the first step $T \leq T_{\max}$ where $y_T$ is the end-of-sequence symbol. A verifier assigns a terminal binary reward $R(\tau) \in \{0,1\}$ indicating answer correctness, with all intermediate rewards equal to zero. Training maximizes the expected return through policy gradient
\begin{equation}\label{eq:objective}
    J(\theta) \triangleq \mathbb{E}_{x \sim \mathcal{D},\,\tau \sim \pi_\theta(\cdot \mid x)}\!\left[R(\tau)\right], \qquad
    \nabla_\theta J(\theta) = \mathbb{E}\!\left[\sum_{t=1}^{T} A_t(s_t, y_t)\,\nabla_\theta \log \pi_\theta(y_t \mid s_t)\right],
\end{equation}
where $\mathcal{D}$ is the query distribution and $A_t : \mathcal{S} \times \mathcal{V} \to \mathbb{R}$ is any per-token advantage estimator unbiased against $Q_t(s_t, y_t) \triangleq \mathbb{E}_{\pi_\theta}[R \mid s_t, y_t]$ when paired with the score function $\nabla_\theta \log \pi_\theta(y_t \mid s_t)$.

GRPO~\citep{guo2025deepseek} forgoes a learned value baseline by drawing a group of $G \geq 2$ trajectories $\{\tau_i\}_{i=1}^{G} \sim \pi_\theta(\cdot \mid x)$ per query and assigning every token in $\tau_i$ the trajectory-level standardized advantage
\begin{equation}\label{eq:grpo}
    \hat{A}_i^{\mathrm{GRPO}} \triangleq \frac{R_i - \mu}{\sigma + \varepsilon}, \qquad \mu \triangleq \frac{1}{G}\sum_{j=1}^{G} R_j, \quad \sigma \triangleq \sqrt{\frac{1}{G}\sum_{j=1}^{G}(R_j - \mu)^2},
\end{equation}
so that $A_t(\tau_i) \equiv \hat{A}_i^{\mathrm{GRPO}}$ for all $t \in \{1,\dots,T_i\}$. The estimator is exchangeable in $t$, so if a subset $\mathcal{C}(\tau)$ of cardinality $k \ll T$ carries the causal responsibility for $R$, the signal at $t \in \mathcal{C}(\tau)$ is attenuated by a factor of $k/T$ while the remaining $T - k$ positions receive gradient mass with zero expected contribution~\citep{jain2025learning,li2026reason}.

The cleanest reference quantity for token-level credit is the on-policy state value
\begin{equation}\label{eq:value}
    V_t \triangleq P^{\pi_\theta}(R = 1 \mid x, y_{1:t-1}), \qquad V_0 \triangleq P^{\pi_\theta}(R{=}1 \mid x), \qquad V_{T+1} \triangleq R,
\end{equation}
which satisfies the Bellman consistency $V_t = \mathbb{E}_{y_t \sim \pi_\theta}[V_{t+1}]$. The ideal token-level advantage is the one-step temporal difference $A_t \triangleq V_{t+1} - V_t = Q_t(s_t, y_t) - V_t(s_t)$, and telescoping along the trajectory gives
\begin{equation}\label{eq:ideal_adv}
    \sum_{t=1}^{T} A_t \;=\; V_{T+1} - V_0 \;=\; R - V_0,
\end{equation}
so the question is how to distribute the credit budget $R - V_0$ across the $T$ positions. PPO~\citep{schulman2017proximal} trains a value network $V_\phi(s_t)$ to approximate Eq.~\eqref{eq:value} at substantial memory cost; GRPO sets $V_t \equiv V_0$ and distributes $R - V_0$ uniformly.
Note that token-level estimators concentrate the learning signal on the subset of positions causally responsible for $R$, while estimators constant in $t$ inject the variance of $R$ uniformly across all positions~\citep{liu2025uniform,song2026survey}.

\vspace{-1mm}
\subsection{On-Policy Distillation}\label{sec:opd}
\vspace{-1mm}

On-policy distillation (OPD) pairs a student policy $\pi_\theta$ with a frozen teacher $\pi_T$ and supervises $\pi_\theta$ on its own rollouts under a per-token divergence~\citep{agarwal2024policy,gu2023minillm,lu2025onpolicydistillation}. In the reasoning setting most relevant to RLVR, the teacher is conditioned on privileged information such as the ground-truth answer $y^\ast$, so that $\pi_T(\cdot \mid x, y_{<t}) \equiv \pi_T(\cdot \mid x, y^\ast, y_{<t})$ approximates the success-conditional next-token distribution $P(\cdot \mid x, y_{<t}, R{=}1)$~\citep{zhao2026self,hubotter2026reinforcement}. Drawing $y_{1:T} \sim \pi_\theta(\cdot \mid x)$, OPD minimizes the on-policy summed reverse KL
\begin{equation}\label{eq:opd}
    \mathcal{L}_{\mathrm{OPD}}(\theta) \triangleq \mathbb{E}_{x,\,y_{1:T} \sim \pi_\theta(\cdot \mid x)}\!\left[\sum_{t=1}^{T} \mathrm{KL}\!\left(\pi_\theta(\cdot \mid s_t) \,\big\|\, \pi_T(\cdot \mid s_t)\right)\right].
\end{equation}
Standard practice~\citep{agarwal2024policy,lu2025onpolicydistillation} treats the rollout as a stop-gradient sample and differentiates only the inner KL through the student logits, under which the contribution at the sampled token $y_t$ reduces to
\begin{equation}\label{eq:opd_grad}
    \nabla_\theta \mathcal{L}_{\mathrm{OPD}}(\theta)\big|_{y_t} = -\,\log \lambda_t^{\mathrm{OPD}} \cdot \nabla_\theta \log \pi_\theta(y_t \mid s_t), \qquad
    \log \lambda_t^{\mathrm{OPD}} \triangleq \log \pi_T(y_t \mid s_t) - \log \pi_\theta(y_t \mid s_t).
\end{equation}
Comparing Eq.~\eqref{eq:opd_grad} with Eq.~\eqref{eq:objective} shows that the OPD update is a REINFORCE estimator in which $\log\lambda_t^{\mathrm{OPD}}$ plays the role of a per-token reward rather than an advantage. The reward coincides with the KL-shaping term in PPO-based RLHF~\citep{schulman2017proximal} when the privileged teacher serves as the reference policy and the environment reward vanishes, so OPD is the special case of token-level PPO without a learned value baseline and without trust-region clipping. Every position is updated independently of its place in the trajectory, with the local signal $\log\lambda_t^{\mathrm{OPD}}$ decoupled from both the cumulative agreement along $y_{<t}$ and the terminal outcome $R$~\citep{yang2026self,song2026survey}.

\textbf{Advantage under the optimal policy.} The unbiasedness condition below Eq.~\eqref{eq:objective} leaves freedom in the reference policy under which $A_t$ is defined: replacing $A_t^{\pi_\theta}$ with the advantage under a policy $\pi^\ast$ that dominates $\pi_\theta$ in value preserves a valid policy-improvement direction~\citep{schulman2017proximal,jain2025learning}.
Let
\begin{equation}\label{eq:pi_star}
    \pi^\ast(y_t \mid s_t) \;\triangleq\; P(y_t \mid x, y_{<t}, R{=}1), \qquad V_t^{\pi^\ast}(s_t) \;\triangleq\; P^{\pi^\ast}(R{=}1 \mid s_t),
\end{equation}
denote the success-conditional optimal policy and its value, with $A_t^{\pi^\ast} \triangleq Q_t^{\pi^\ast}(s_t, y_t) - V_t^{\pi^\ast}(s_t)$ the corresponding one-step advantage. Equation~\eqref{eq:opd_grad} then admits a clean re-reading: the OPD teacher is an approximation $\pi_T \approx \pi^\ast$, so $\log\lambda^{\mathrm{OPD}}_t$ is the one-step expert log-ratio between $\pi^\ast$ and $\pi_\theta$, serving as a per-token surrogate for $A_t^{\pi^\ast}$ in the spirit of advantage-weighted regression.

\textbf{Bridge to our method.} So far we have identified two complementary failure modes of existing methods. The GRPO advantage in Eq.~\eqref{eq:grpo} resolves the credit budget $R - V_0$ uniformly across all $T$ tokens, while the OPD signal $\log\lambda_t$ in Eq.~\eqref{eq:opd_grad}, an expert log-ratio under $\pi^\ast$, is purely local and ignores both the cumulative evidence along $y_{<t}$ and the terminal outcome $R$.
A principled estimator should sit between the two extremes, retaining OPD's dense $\pi^\ast$-anchored signal while reweighting it by a position-dependent coefficient that tracks the running posterior of success encoded by Eq.~\eqref{eq:value}, recovering an approximation to $A_t/A_t^{\pi^\ast}$  without a learned value network.

\vspace{-2mm}
\section{Bayesian Evidence Aggregation}\label{sec:bayesian}
\vspace{-2mm}
The bridge of  Section~\ref{sec:opd} singled out $V_t$ as the right object for token-level credit assignment: the ideal advantage $A_t = V_{t+1} - V_t$ should telescope to $R - V_0$ in Eq.~\eqref{eq:ideal_adv}, so any accurate tracker of $V_t$ distributes the budget across tokens automatically but the challenge is to track $V_t$ without a learned critic. In this section, we first show that the problem reduces to learning an additive log-odds update driven by a per-token Bayes factor; Then we construct two tractable surrogate for the Bayes factors.

\vspace{-1mm}
\subsection{Bayesian Value Recursion}\label{sec:recursion}
\vspace{-1mm}
Applying Bayes' theorem to update the success probability after observing $y_t$ gives
\begin{equation}\label{eq:bayes_step}
    V_{t+1} \;=\; P^{\pi_\theta}(R{=}1 \mid x, y_{1:t}) \;=\; \frac{P(y_t \mid x, y_{<t}, R{=}1)\,V_t}{P(y_t \mid x, y_{<t})}.
\end{equation}
Define the per-token Bayes factor between the success and failure branches as
\begin{equation}\label{eq:lambda_star}
    \lambda_t^{\star} \;\triangleq\; \frac{P(y_t \mid x, y_{<t}, R{=}1)}{P(y_t \mid x, y_{<t}, R{=}0)} \;=\; \frac{P(y_t \mid x, y_{<t}, y_T{=}y^\ast)}{P(y_t \mid x, y_{<t}, y_T{\neq}y^\ast)},
\end{equation}
and let $\ell_t \triangleq \log\frac{V_t}{1-V_t}$ collect the running log-odds of success. 
The probability $P$ in Eqs.~\eqref{eq:bayes_step}--\eqref{eq:lambda_star} can be taken under either $\pi_\theta$ or $\pi^\ast$ from Eq.~\eqref{eq:pi_star}, yielding valid Bayesian recursions for $V_t^{\pi_\theta}$ and $V_t^{\pi^\ast}$ respectively, with the latter targeted by the expert log-ratio of Section~\ref{sec:opd}, though they are not exactly the same.
Using the law of total probability $P(y_t \mid x, y_{<t}) = P(y_t \mid x, y_{<t}, R{=}1)\,V_t + P(y_t \mid x, y_{<t}, R{=}0)\,(1-V_t)$, Eq.~\eqref{eq:bayes_step} can be rewritten as the additive log-odds recursion
\begin{equation}\label{eq:V_update_exact}
    V_{t+1} \;=\; \frac{\lambda_t^{\star}\,V_t}{\lambda_t^{\star}\,V_t + (1 - V_t)}, \qquad \ell_{t+1} \;=\; \ell_t + \log\lambda_t^{\star},
\end{equation}
in which every token contributes additively to $\ell_t$ via its log-Bayes-factor. Unrolling from $\ell_0 = \mathrm{logit}(V_0)$ and applying the sigmoid yields the closed-form value, the corresponding token-level advantage, and a telescoping credit identity, respectively,
\begin{equation}\label{eq:adv_exact}
    V_t \;=\; \sigma\!\left(\ell_0 + \sum_{s<t} \log\lambda_s^{\star}\right), \qquad A_t \;=\; \sigma(\ell_t + \log\lambda_t^{\star}) - \sigma(\ell_t),
\end{equation}
\begin{equation}\label{eq:budget_exact}
    \sum_{t=1}^{T} A_t \;=\; \sigma(\ell_{T+1}) - \sigma(\ell_0) \;=\; R - V_0.
\end{equation}
Two structural properties follow. Since $\sigma$ is Lipschitz with constant $\tfrac{1}{4}$, every token satisfies $|A_t| \leq \tfrac{1}{4}\,|\log\lambda_t^{\star}|$, and the universal range bound $|A_t| \leq 1$ applies regardless of $\log\lambda_t^{\star}$, so the per-token contribution is jointly bounded by $\min(\tfrac{1}{4}|\log\lambda_t^{\star}|, 1)$. Equation~\eqref{eq:budget_exact} fixes the total credit by the outcome and the prior alone, so any estimator that respects the additive log-odds structure of Eq.~\eqref{eq:V_update_exact} distributes the GRPO budget across tokens with no further bookkeeping.
Token-level credit assignment therefore reduces to estimating $\log\lambda_t^{\star}$, which is defined in terms of the success- and failure-conditional next-token distributions, which are not directly observable.

\vspace{-2mm}
\subsection{Two Oracle Surrogates for Bayesian Evidence}\label{sec:primitive}
\vspace{-1mm}
We construct estimators of $\lambda_t^{\star}$ under either probability measure ($\pi_\theta$ or $\pi^\ast$). Both arise from the same pair of modeling choices applied to whichever scorer $\hat\pi$ realizes the measure.

\begin{modelingchoice}[Oracle Conditioning]\label{mc:oracle}
The success-conditional distribution is approximated by appending $y^\ast$ to the prompt:
\begin{equation}\label{eq:oracle_approx}
    P(y_t \mid x, y_{<t}, R{=}1) \;\approx\; \hat\pi(y_t \mid x, y_{<t}, y^\ast).
\end{equation}
\end{modelingchoice}
The justification is the privileged-teacher rationale of Section~\ref{sec:opd}: conditioning on $y^\ast$ constrains the scorer to tokens consistent with reaching the correct answer.

\begin{modelingchoice}[Failure Branch]\label{mc:fail}
The failure-conditional distribution is approximated by the marginal under the same scorer:
\begin{equation}\label{eq:fail_approx}
    P(y_t \mid x, y_{<t}, R{=}0) \;\approx\; \hat\pi(y_t \mid x, y_{<t}).
\end{equation}
\end{modelingchoice}
The marginal admits the decomposition $\hat\pi(y_t \mid s_t) = P(y_t \mid s_t, R{=}1)\,V_t + P(y_t \mid s_t, R{=}0)\,(1-V_t)$: the failure branch dominates as $V_t \to 0$ and the approximation is tightest where incorrect trajectories spend most of their length, while the gap is largest near $V_t = \tfrac{1}{2}$ where the state weight $V_t(1-V_t)$ peaks.

Substituting Modeling Choices~\ref{mc:oracle} and~\ref{mc:fail} into Eq.~\eqref{eq:lambda_star} yields the unified estimator
\begin{equation}\label{eq:lambda_def}
    \lambda_t \;\triangleq\; \frac{\hat\pi(y_t \mid x, y_{<t}, y^\ast)}{\hat\pi(y_t \mid x, y_{<t})}, \qquad \log\lambda_t \;=\; \log\hat\pi(y_t \mid x, y_{<t}, y^\ast) - \log\hat\pi(y_t \mid x, y_{<t}),
\end{equation}
parameterized by a single scorer $\hat\pi$. Setting $\hat\pi = \pi_\theta$ yields the \emph{self-oracle}, which requires only one extra forward pass per trajectory and recovers the OPD per-token reward of Eq.~\eqref{eq:opd_grad} exactly, since the OPD privileged teacher is the student conditioned on $y^\ast$~\citep{zhao2026self}. 
Setting $\hat\pi = \pi_\phi$ for a frozen scorer that approximates $\pi^\ast$ at lower KL distance than $\pi_\theta$~\citep{agarwal2024policy,lu2025onpolicydistillation} yields the \emph{teacher-oracle}, which reduces approximation error at the cost of a separate scoring model; $\pi_\phi$ receives no gradient updates and need not be instruction-tuned, since its role is likelihood scoring rather than generation.

The pointwise gap between either estimator and the exact Bayes factor is captured by the success-branch KL
\begin{equation}\label{eq:eps_def}
    \epsilon_t \;\triangleq\; D_{\mathrm{KL}}\!\big(P(\cdot \mid x, y_{<t}, R{=}1) \,\big\|\, \hat\pi(\cdot \mid x, y_{<t}, y^\ast)\big),
\end{equation}
which traces a design axis: the self-oracle yields the largest $\epsilon_t$, and a perfect teacher drives $\epsilon_t \to 0$, recovering $\lambda_t = \lambda_t^{\star}$ and the exact recursion. Substituting $\lambda_t$ for $\lambda_t^{\star}$ in Eq.~\eqref{eq:V_update_exact} gives the implementable form, with the budget residual controlled by the cumulative oracle quality $\sum_t \epsilon_t$ (Appendix~\ref{app:bias}). The telescoping identity in Eq.~\eqref{eq:budget_exact} applies to the raw advantage before clipping, sign anchoring, and group normalization in Section~\ref{sec:method}; the implemented training advantage redistributes credit within each trajectory and inherits its direction from the verifier. Figure~\ref{fig:analysis}(c) stratifies the residual by length quartile and outcome: the mean rises monotonically from Q1 to Q4 in both classes, with $R{=}0$ above $R{=}1$ because $V_t$ saturates earlier on incorrect rollouts.

\begin{figure}[!t]
    \centering
    \includegraphics[width=\linewidth]{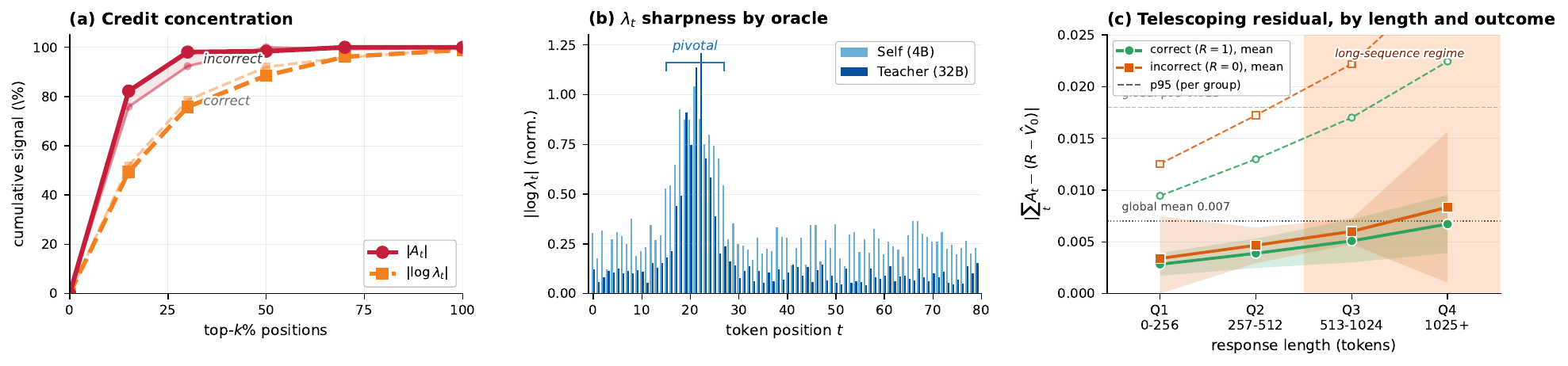}
    \caption{Empirical analysis on 200 MATH-500 trajectories from Qwen3-4B-Instruct. (a)~$|A_t|$ concentrates more sharply than $|\log\lambda_t|$, with the gap widest on incorrect trajectories where $V_t$ saturates early. (b)~The 32B teacher produces sharper $|\log\lambda_t|$ at pivotal tokens and lower background in determined regions. (c)~Telescoping budget residual stratified by length quartile and outcome. Solid lines mark the per-stratum mean of $|\sum_t A_t - (R - \hat V_0)|$ with 95\% confidence bands. The shaded region marks the long-sequence regime where evidence clipping caps the recursion.}
    \label{fig:analysis}
\end{figure}

\vspace{-2mm}
\subsection{Further Discussion and Empirical Validation}\label{sec:anatomy}
\vspace{-1mm}

Subtracting $V_t$ from both sides of Eq.~\eqref{eq:V_update_exact} gives the closed-form identity
\begin{equation}\label{eq:exact_adv}
    A_t \;=\; \frac{V_t(1-V_t)(\lambda_t - 1)}{\lambda_t \, V_t + 1 - V_t}.
\end{equation}
For $|\lambda_t - 1|$ small, $\lambda_t - 1 \approx \log\lambda_t$ and the denominator approaches $1$, yielding the first-order approximation
\begin{equation}\label{eq:first_order}
    A_t \;\approx\; V_t(1 - V_t) \cdot \log\lambda_t.
\end{equation}
The factorization separates two roles: the oracle evidence $\log\lambda_t$ provides per-token discrimination between success and failure branches, identical to the gradient signal of on-policy distillation in Eq.~\eqref{eq:opd_grad}, while the state weight $V_t(1-V_t)$ modulates the evidence by the running belief. Reliability diagrams in Appendix~\ref{app:value_accuracy} confirm that intermediate $V_t$ at $t = T/4, T/2, 3T/4$ is well calibrated.

Two predictions follow. Since $V_t(1-V_t)$ collapses once the running belief commits, credit concentrates on undetermined positions; on incorrect trajectories $V_t$ saturates toward $0$ early, so the state weight collapses even where $\log\lambda_t$ remains elevated by answer-correlated content. Figure~\ref{fig:analysis}(a) confirms the prediction: the cumulative signal of $|A_t|$ rises more steeply than that of $|\log\lambda_t|$, with the gap widest on incorrect trajectories. A stronger oracle should additionally sharpen $\lambda_t$ at pivotal positions while keeping the magnitude small elsewhere, and Figure~\ref{fig:analysis}(b) bears this out: the 32B teacher concentrates $|\log\lambda_t|$ more tightly around strategy-selection tokens than the 4B self-oracle. The prior $V_0$ supplies a complementary adaptation: $V_0(1-V_0)$ multiplies the entire trajectory, downweighting easy and unsolvable queries throughout, which yields prompt-difficulty filtering without manual thresholding~\citep{cui2025process}.

The state weight admits a variance-reduction interpretation under an idealized score-vector assumption. Writing the policy gradient as $\hat g = \sum_t w_t h_t$ with $h_t = \nabla_\theta \log\pi_\theta(y_t \mid x, y_{<t})$ and treating the $h_t$ as uncorrelated across positions, the variance gap between the state-blind weighting $w_t = \log\lambda_t$ and the Bayesian weighting $w_t = V_t(1-V_t)\log\lambda_t$ satisfies
\begin{equation}\label{eq:var}
\begin{split}
    \operatorname{Var}[\hat{g}_{0}] - \operatorname{Var}[\hat{g}_{1}] \;&\geq\; (1 - \gamma^2)\!\sum_{t \in \mathcal{D}} (\log\lambda_t)^2 \cdot \operatorname{Var}[h_t], \\
    \text{where} \quad \mathcal{D} \;&=\; \{\, t : |\log\lambda_t| \geq \delta,\; V_t(1 - V_t) < \gamma \,\}.
\end{split}
\end{equation}
Score vectors in autoregressive models are correlated across positions, so Eq.~\eqref{eq:var} is a directional indication rather than a quantitative guarantee, and the set $\mathcal{D}$ expands once the model commits to a solution path, predicting larger gains on longer trajectories that are consistent with the stratified residual in Figure~\ref{fig:analysis}(c) and the length-stratified accuracy in Table~\ref{tab:ext}(b).
\vspace{-3mm}
\section{\n: Oracle-Prompted Policy Optimization}\label{sec:method}
\vspace{-2mm}
The Bayesian value recursion of Section~\ref{sec:bayesian} is agnostic to how $\lambda_t$ is estimated. The remainder of the section specifies the estimator, defines the prior $V_0$, and the direction anchoring.

\textbf{Oracle estimator.} Computing $\lambda_t$ requires scoring every sampled token both with and without $y^\ast$ in context. In the \emph{self-oracle} mode, the policy $\pi_\theta$ scores both contexts, adding one forward pass per trajectory and no extra model. 
In the \emph{teacher-oracle} mode, a frozen $\pi_\phi$ replaces $\pi_\theta$ in both terms; $\pi_\phi$ receives no gradient updates and need not be instruction-tuned, since its role is likelihood scoring. 
The two modes operate at different points along the $\epsilon_t$ axis of Section~\ref{sec:primitive}, and once $\log\hat\lambda_{i,t}$ is computed every later step is identical, so the Bayesian machinery is decoupled from the choice of estimator.

\textbf{Prior and ratio computation.} Given $G$ trajectories per query, the prior is the $\operatorname{Beta}(\alpha,\alpha)$ posterior mean, and ratios are clipped for numerical stability:
\begin{equation}\label{eq:prior}
    \hat V_0 = \frac{k + \alpha}{G + 2\alpha}, \qquad \ell_0 = \log\frac{k + \alpha}{G - k + \alpha}, \qquad \log\hat\lambda_{i,t} = \operatorname{clip}\!\big(o_{i,t} - s_{i,t},\, -C,\, C\big),
\end{equation}
where $s_{i,t} = \log\hat\pi(y_{i,t} \mid x, y_{i,<t})$ and $o_{i,t} = \log\hat\pi(y_{i,t} \mid x, y^\ast, y_{i,<t})$. Setting $\alpha = 1$ keeps the log-odds finite in all-correct and all-incorrect groups, which would otherwise collapse every advantage to zero, while the clip bound $C$ prevents a single extreme $\log\lambda_t$ from saturating $\ell_t$ and silencing later positions. The log-odds then accumulate as $\ell_{i,t} = \ell_0 + \sum_{j<t}\log\hat\lambda_{i,j}$ and the raw Bayesian advantage is $A_{i,t}^{\mathrm{raw}} = \sigma(\ell_{i,t} + \log\hat\lambda_{i,t}) - \sigma(\ell_{i,t})$.

\textbf{Direction anchoring.} The sign of $A_{i,t}^{\mathrm{raw}}$ is determined by $\log\lambda_t$, not by the trajectory outcome. Any token with $\log\lambda_t > 0$ in an incorrect trajectory would receive positive advantage and reinforce a flawed reasoning step, and tokens in correct trajectories with $\log\lambda_t < 0$ would be penalized despite contributing to a successful answer; the failure mode mirrors pure self-distillation, where the gradient direction follows the privileged teacher rather than the environment reward~\citep{yang2026self}. To prevent the inversion, \n\ anchors the sign of every token-level advantage to the sequence-level GRPO advantage and normalizes across the group:
\begin{equation}\label{eq:anchored_adv}
    \hat{A}_{i,t} = \operatorname{sign}\!\big(A_i^{\mathrm{seq}}\big) \cdot |A_{i,t}^{\mathrm{raw}}|, \qquad A_i^{\mathrm{seq}} = \frac{R_i - \mu}{\sigma}, \qquad \tilde{A}_{i,t} = \frac{\hat{A}_{i,t} - \bar A}{\sigma_A + \epsilon}.
\end{equation}
The environment reward retains exclusive control over which trajectories are reinforced or penalized, while the Bayesian state weight and oracle evidence govern only the relative magnitude across positions within each trajectory.

\textbf{Policy update.} The normalized advantages enter the clipped surrogate objective
\begin{equation}\label{eq:ppo}
    \mathcal{L}(\theta) = \mathbb{E}\!\left[\frac{1}{G}\sum_{i=1}^{G} \frac{1}{T_i}\sum_{t=1}^{T_i} \min\!\left(\rho_{i,t}\,\tilde{A}_{i,t},\; \operatorname{clip}(\rho_{i,t},\, 1{-}\epsilon,\, 1{+}\epsilon)\,\tilde{A}_{i,t}\right)\right], \quad \rho_{i,t} = \frac{\pi_\theta(y_{i,t} \mid x, y_{i,<t})}{\pi_{\theta_{\mathrm{old}}}(y_{i,t} \mid x, y_{i,<t})}.
\end{equation}
Algorithm~\ref{alg:oppo} summarizes the procedure that replace the uniform sequence-level advantage $A_i^{\mathrm{seq}}$ by the token-level $\tilde{A}_{i,t}$, at the cost of one additional forward pass per trajectory.

\begin{wrapfigure}{r}{0.52\textwidth}
\vspace{-9mm}
\begin{minipage}{\linewidth}
\begin{algorithm}[H]
\caption{Oracle-Prompted Policy Optimization}\label{alg:oppo}
\begin{algorithmic}[1]
\Require Policy $\pi_\theta$, oracle estimator $\pi_{\mathrm{est}}$ ($\pi_\theta$ or $\pi_\phi$), group size $G$, prior $\alpha$, clip $C$
\For{each query $x$ with answer $y^\ast$}
    \State \stepcomment{\# Rollout and reward}
    \State Sample $\{y^{(i)}\}_{i=1}^G \sim \pi_\theta(\cdot \mid x)$
    \State Obtain $R_i$;\; $A_i^{\mathrm{seq}} \!\gets\! (R_i \!-\! \mu)/\sigma$
    \State $k \!\gets\! \sum_i R_i$;\; $\ell_0 \!\gets\! \log\frac{k+\alpha}{G-k+\alpha}$
    \State \stepcomment{\# Oracle evaluation}
    \For{$i = 1, \ldots, G$}
        \State $s_t \!\gets\! \log\pi_{\mathrm{est}}(y_{i,t} \mid x, y_{i,<t})$
        \State $o_t \!\gets\! \log\pi_{\mathrm{est}}(y_{i,t} \mid x, y^\ast, y_{i,<t})$
        \State $\log\hat{\lambda}_{i,t} \!\gets\! \operatorname{clip}(o_t \!-\! s_t, -C, C)$
    \EndFor
    \State \stepcomment{\# Bayesian credit assignment}
    \For{$i = 1, \ldots, G$}
        \State $\ell_{i,t} \!\gets\! \ell_0 + \sum_{j<t} \log\hat{\lambda}_{i,j}$
        \State $A_{i,t}^{\mathrm{raw}} \!\gets\! \sigma(\ell_{i,t} \!+\! \log\hat{\lambda}_{i,t}) - \sigma(\ell_{i,t})$
        \State $\hat{A}_{i,t} \!\gets\! \operatorname{sign}(A_i^{\mathrm{seq}}) \cdot |A_{i,t}^{\mathrm{raw}}|$
    \EndFor
    \State $\tilde{A}_{i,t} \!\gets\! (\hat{A}_{i,t} \!-\! \bar{A}) / (\sigma_A \!+\! \epsilon)$
\EndFor
\State \stepcomment{\# Policy update}
\State Update $\theta$ via Eq.~\eqref{eq:ppo}
\end{algorithmic}
\end{algorithm}
\end{minipage}
\vspace{-8mm}
\end{wrapfigure}

\vspace{-2mm}
\section{Results}\label{sec:exp}
\vspace{-2mm}

\begin{table}[t]
\centering
\caption{Pass@1 accuracy (\%) on reasoning benchmarks. All models are trained on DeepScaleR. \n~ variants are grouped by the underlying optimization framework and the oracle estimator. Subscript arrows denote improvement over the corresponding baseline.}
\label{tab:main}
\vspace{4pt}
\resizebox{\textwidth}{!}{
\begin{tabular}{l cccc cc c}
\toprule
 & \multicolumn{4}{c}{\textit{Mathematical Reasoning}} 
 & \multicolumn{2}{c}{\textit{Scientific Reasoning}} 
 & \textit{Code} \\
\cmidrule(lr){2-5} \cmidrule(lr){6-7} \cmidrule(lr){8-8}
Method & GSM8K & MATH-500 & AMC'23 & AIME'24 
       & GPQA-D & ARC-C & LCB \\
\midrule
\multicolumn{8}{l}{\textit{Qwen3-4B-Instruct-2507}} \\
SFT                      
  & 89.5 & 70.2 & 52.5 & 44.0 
  & 37.8 & 87.1 & 28.6 \\
GRPO             
  & 91.4 & 72.8 & 54.0 & 46.4 
  & 39.2 & 88.0 & 31.5 \\
Dr.GRPO          
  & 91.6 & 73.0 & 54.5 & 46.3 
  & 39.4 & 88.3 & 31.4 \\
DAPO             
  & 92.3 & 74.2 & 55.8 & 47.0 
  & 40.3 & 88.5 & 32.8 \\
SDPO             
  & 92.8 & 74.6 & 56.5 & 48.2 
  & 40.0 & 89.0 & 32.4 \\
\cmidrule(lr){1-8}
\multicolumn{8}{l}{\quad\textit{\small on GRPO}} \\
\oursrowa Self-\n      
  & 93.0\inc{1.6} & 76.0\inc{3.2} & 58.5\inc{4.5} 
  & 49.5\inc{3.1} & 41.6\inc{2.4} & 89.5\inc{1.5} 
  & 34.2\inc{2.7} \\
\oursrowb Teacher-\n   
  & 93.8\inc{2.4} & 77.4\inc{4.6} & 60.0\inc{6.0} 
  & 51.3\inc{4.9} & 42.5\inc{3.3} & 89.8\inc{1.8} 
  & 35.4\inc{3.9} \\
\multicolumn{8}{l}{\quad\textit{\small on DAPO}} \\
\oursrowa Self-\n     
  & 93.6\inc{1.3} & 76.8\inc{2.6} & 59.2\inc{3.4} 
  & 50.0\inc{3.0} & 41.8\inc{1.5} & 89.6\inc{1.1} 
  & 34.6\inc{1.8} \\
\oursrowb Teacher-\n  
  & \textbf{94.5}\inc{2.2} & \textbf{78.5}\inc{4.3} 
  & \textbf{61.5}\inc{5.7} 
  & \textbf{52.2}\inc{5.2} & \textbf{43.2}\inc{2.9} 
  & \textbf{90.0}\inc{1.5} 
  & \textbf{36.2}\inc{3.4} \\
\midrule
\multicolumn{8}{l}{\textit{Phi-4-mini-instruct}} \\
SFT         
  & 88.2 & 71.4 & 28.0 & 9.5 
  & 30.8 & 83.5 & 20.4 \\
GRPO             
  & 89.8 & 75.0 & 32.5 & 12.0 
  & 32.4 & 84.6 & 23.0 \\
Dr.GRPO          
  & 90.0 & 74.8 & 33.0 & 11.7 
  & 32.6 & 84.8 & 22.8 \\
DAPO             
  & 90.5 & 76.2 & 34.0 & 13.3 
  & 33.5 & 85.0 & 24.2 \\
SDPO             
  & 90.8 & 76.5 & 35.5 & 13.0 
  & 33.0 & 85.4 & 23.8 \\
\cmidrule(lr){1-8}
\multicolumn{8}{l}{\quad\textit{\small on GRPO}} \\
\oursrowa Self-\n      
  & 91.4\inc{1.6} & 78.2\inc{3.2} & 37.0\inc{4.5} 
  & 15.3\inc{3.3} & 34.5\inc{2.1} & 86.2\inc{1.6} 
  & 25.8\inc{2.8} \\
\oursrowb Teacher-\n   
  & 92.0\inc{2.2} & 80.0\inc{5.0} & 39.5\inc{7.0} 
  & 17.0\inc{5.0} & 35.8\inc{3.4} & 86.8\inc{2.2} 
  & 27.2\inc{4.2} \\
\multicolumn{8}{l}{\quad\textit{\small on DAPO}} \\
\oursrowa Self-\n     
  & 91.8\inc{1.3} & 78.8\inc{2.6} & 38.0\inc{4.0} 
  & 16.0\inc{2.7} & 35.0\inc{1.5} & 86.4\inc{1.4} 
  & 26.5\inc{2.3} \\
\oursrowb Teacher-\n  
  & \textbf{92.5}\inc{2.0} & \textbf{80.8}\inc{4.6} 
  & \textbf{40.5}\inc{6.5} 
  & \textbf{17.8}\inc{4.5} & \textbf{36.4}\inc{2.9} 
  & \textbf{87.0}\inc{2.0} 
  & \textbf{28.0}\inc{3.8} \\
\bottomrule
\end{tabular}
}
\vspace{-4mm}
\end{table}

\subsection{Experimental Setup}
\vspace{-1mm}
\textbf{Models and data.} We train all methods on DeepScaleR~\citep{deepscaler2025}, approximately 40K problem-answer pairs drawn from AMC, AIME, MATH, and OlympiadBench. Each problem comes with a ground-truth integer answer that serves both as the verifiable reward and as the oracle context $y^\ast$ for computing $\lambda_t$. We evaluate on two instruction-tuned models of comparable scale, Qwen3-4B-Instruct~\citep{yang2025qwen3} and Phi-4-mini-instruct~\citep{abdin2025phi}.

\textbf{Benchmarks.} For mathematical reasoning we use GSM8K~\citep{cobbe2021gsm8k}, MATH-500~\citep{lightman2023lets}, AMC~2023\footnote{\url{https://huggingface.co/datasets/AI-MO/aimo-validation-amc}}, and AIME~2024 \& 2025\footnote{\url{https://huggingface.co/datasets/AI-MO/aimo-validation-aime}}; for scientific reasoning, GPQA-Diamond~\citep{rein2024gpqa} and ARC-Challenge~\citep{allenai:arc}; for code generation, LiveCodeBench~\citep{jain2024livecodebench}. None of the benchmarks overlap the training data, and Pass@1 accuracy is averaged over 16 sampling runs.

\textbf{Comparison and configuration.} Baselines are drawn from the GRPO family: GRPO~\citep{guo2025deepseek}, Dr.GRPO~\citep{liu2025understanding}, DAPO~\citep{yu2025dapo}, and SDPO~\citep{hubotter2026reinforcement}. We additionally apply \n\ on top of DAPO to isolate whether the Bayesian advantage adds gains beyond optimization mechanics. The teacher-oracle uses Qwen3-32B as the estimator $\pi_\phi$ for both base models. Experiments run on $2 \times$ H200 GPUs using the verl framework~\citep{sheng2025hybridflow}, with AdamW at learning rate $1 \times 10^{-6}$, batch size $32$, group size $G = 8$, clip $\epsilon = 0.2$, evidence clip $C = 3.0$, and prior $\alpha = 1$.

\vspace{-3mm}
\subsection{Main Results}
\vspace{-2mm}

Table~\ref{tab:main} reports performance across all seven benchmarks. Three findings organize the discussion.

\textbf{Token-level credit improves reasoning.} Self-\n\ outperforms GRPO and its optimization variants on every benchmark for both base models, with gains concentrated on competition-level tasks where reasoning chains are longest. The pattern matches the variance argument of Section~\ref{sec:anatomy}: longer sequences accumulate more positions in determined regions where $V_t(1-V_t) \approx 0$, and the Bayesian advantage suppresses gradient signal at the saturated positions while GRPO distributes credit uniformly. The further gains over SDPO, which leverages oracle-conditioned signals without cumulative belief tracking, indicate that the running state estimate adds headroom beyond per-token discrimination.
\textbf{Oracle quality translates into training gains.} Teacher-\n\ improves over Self-\n\ on every benchmark, with the largest gains on the hardest tasks. Since the Bayesian machinery is identical in both variants, the improvement is attributable to sharper $\lambda_t$ estimates from the larger oracle. Scoring student tokens under a 32B model approximates a controlled distillation signal, so the additional gains over Self-\n\ should be read as evidence that the framework absorbs higher-quality $\lambda_t$ rather than as a within-model improvement at fixed scale; Self-\n\ remains the cleaner test of the Bayesian mechanism.
\textbf{The Bayesian advantage complements optimization improvements.} \n\ on top of DAPO yields gains comparable in magnitude to \n\ on GRPO, and Teacher-\n\ on DAPO reaches the highest performance on most benchmarks for both models. Improved clipping and filtering are therefore complementary to finer credit assignment rather than redundant with it.

\vspace{-2mm}
\subsection{Ablation and Analysis}
\vspace{-2mm}

Table~\ref{tab:ablation} removes each component of Self-\n\ individually on Qwen3-4B-Instruct. Direction anchoring is the most important: tokens in incorrect trajectories that correlate with $y^\ast$ produce $\log\lambda_t > 0$ and receive positive advantage, reinforcing flawed reasoning, and AIME'24 performance falls below GRPO without anchoring, mirroring the degradation of pure self-distillation~\citep{yang2026self}. State tracking ranks second, with the largest drops on AMC and AIME'24 where reasoning chains are longest: without the $V_t(1-V_t)$ factor, every position receives weight proportional to $|\log\lambda_t|$ regardless of trajectory commitment, and longer chains accumulate more saturated positions. The "$\log\lambda_t$-only" row, which removes both state tracking and clipping, sits close to GRPO across the four benchmarks; the gap to the full method quantifies the joint contribution of state tracking and numerical control, while the small margin over GRPO, negligible on AIME'24, isolates the value of per-token oracle evidence on its own.

\begin{wraptable}{l}{0.55\textwidth}
\vspace{-14pt}
\centering
\scriptsize
\caption{Component ablation on Qwen3-4B-Instruct-2507, concentrated on long-chain benchmarks.}\label{tab:ablation}
\vspace{2pt}
\begin{tabular}{l cccc}
\toprule
Configuration & M-500 & AMC & AIME'24 & GPQA \\
\midrule
\oursrowa Self-\n~ (full)        & 76.0 & 58.5 & 49.5 & 41.6 \\
\quad remove anchoring           & 71.8 & 51.0 & 43.0 & 38.5 \\
\quad remove tracking            & 74.5 & 55.8 & 46.8 & 40.0 \\
\quad remove clipping            & 75.4 & 57.5 & 48.6 & 41.2 \\
\quad remove prior               & 75.8 & 58.0 & 49.0 & 41.5 \\
\quad $\log\lambda_t$ only       & 73.6 & 55.0 & 46.5 & 39.8 \\
\midrule
GRPO (uniform)                   & 72.8 & 54.0 & 46.4 & 39.2 \\
\bottomrule
\end{tabular}
\vspace{3pt}
\includegraphics[width=1.0\linewidth]{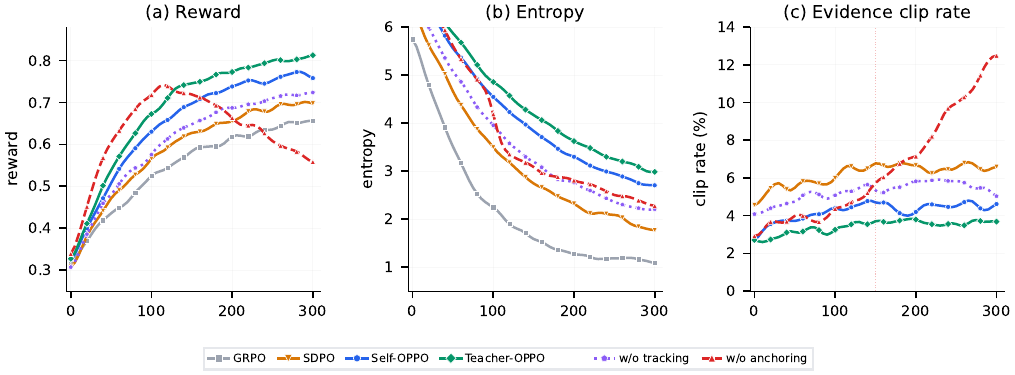}
\captionof{figure}{Without anchoring, reward collapses after step 150 as the clip rate diverges. Removing state tracking reduces the entropy preservation that separates \n\ from other methods.}
\label{fig:dynamics}
\vspace{-10pt}
\end{wraptable}

\begin{table}[!t]
\caption{(a)~Oracle estimator capacity: performance rises monotonically with teacher scale, with the largest jump at the 4B-to-8B transition and near-saturated gains beyond. All teachers are Qwen3 base models. (b)~\n\ gains widen with sequence length as more tokens enter determined regions.}\label{tab:ext}
\vspace{4pt}
\begin{minipage}[t]{0.44\textwidth}
\centering
\small
\textbf{(a)} Oracle estimator capacity
\vspace{4pt}

\begin{tabular}{l r cc}
\toprule
Estimator & Size & M-500 & AIME'24 \\
\midrule
\oursrowa Self   & 4B  & 76.0 & 49.5 \\
Qwen3-8B         & 8B  & 76.8 & 50.2 \\
Qwen3-14B        & 14B & 77.0 & 50.8 \\
\oursrowb Qwen3-32B & 32B & 77.4 & 51.3 \\
\bottomrule
\end{tabular}
\end{minipage}
\hfill
\begin{minipage}[t]{0.52\textwidth}
\centering
\small
\textbf{(b)} Accuracy by response length
\vspace{4pt}

\begin{tabular}{l cccc}
\toprule
Method & 0--256 & 257--512 
       & 513--1024 & 1025+ \\
\midrule
GRPO      & 88.5 & 80.2 & 68.4 & 52.0 \\
SDPO      & 89.0 & 81.5 & 70.8 & 55.2 \\
\oursrowa Self-\n 
          & 89.2 & 82.0 & 73.5 & 62.8 \\
\bottomrule
\end{tabular}
\end{minipage}
\vspace{-3mm}
\end{table}

The training dynamics in Figure~\ref{fig:dynamics} corroborate the static ablation. Reward ceiling and entropy floor both increase from GRPO through SDPO to Teacher-\n: GRPO applies uniform gradient pressure regardless of state and compresses the distribution at every position, whereas \n\ applies weak gradients where $V_t$ has saturated and preserves exploration capacity. The "w/o anchoring" run exhibits a distinctive failure mode: reward rises fastest in the early steps but declines after roughly step 150 as the evidence clip rate rises sharply and $\log\lambda_t$ values grow extreme on surface correlations with $y^\ast$, the runaway prevented by direction anchoring of the trajectory sign.

Two further analyses support the framework. Table~\ref{tab:ext}(a) scales the oracle while holding the student fixed: performance rises monotonically with teacher scale, with the largest jump at the 4B-to-8B transition and near-saturated gains beyond, confirming that the framework absorbs higher-quality $\lambda_t$ and that the estimator needs only next-token prediction capability rather than instruction-following ability. Table~\ref{tab:ext}(b) partitions MATH-500 by response length: all three methods perform comparably on the shortest bin because $V_t$ stays near $V_0$, while the gap widens with length and Self-\n\ shows the largest separation in the longest bin, matching the prediction of Section~\ref{sec:anatomy} that variance reduction grows with $|\mathcal{D}|$. Hyperparameter sensitivity and token-level credit visualizations are in Appendix~\ref{app:additional}.

\vspace{-3mm}
\section{Conclusion}
\vspace{-3mm}
\label{sec:conclusion}
\n\ recovers token-level advantages by accumulating oracle-conditioned likelihood ratios into a running Bayesian estimate of trajectory success, concentrating credit where the outcome remains uncertain and adding no learned components beyond one extra forward pass. Experiments across mathematical, scientific, and code reasoning benchmarks confirm consistent gains that widen on longer sequences, with stronger oracle estimators yielding monotonically better performance. Extending the framework beyond domains with verifiable answers remains an open direction, as does adapting the Bayesian recursion to richer outcome signals such as partial-credit rubrics.

\bibliographystyle{plainnat} 
\bibliography{ref}

\newpage
\appendix

\section{Extended Related Work}\label{app:related}

This appendix expands the discussion in Section~\ref{sec:intro} along three axes that frame \n\ within the broader landscape of credit assignment and reinforcement learning for reasoning. The body section establishes the immediate baselines; here we situate the framework against the wider literature.

\subsection{Trajectory-Level Optimization and Critic-Based Alternatives}\label{app:related_traj}

Group Relative Policy Optimization~\citep{guo2025deepseek} and its descendants form the dominant on-policy paradigm for reasoning. Dr.GRPO~\citep{liu2025understanding} fixes the reference policy in the importance ratio denominator to remove the length-bias of the original objective. DAPO~\citep{yu2025dapo} introduces asymmetric clipping and filtered sampling. GSPO~\citep{zheng2025group} aggregates probabilities at the sequence level. GMPO~\citep{zhao2025geometric} replaces the arithmetic-mean group baseline with a geometric mean. Across these variants, the per-token advantage remains a single trajectory-level scalar broadcast uniformly across positions~\citep{guo2025segment}, which is the limitation that motivates \n. The Bayesian recursion of Section~\ref{sec:bayesian} is orthogonal to the optimization-mechanic improvements, and Section~\ref{sec:exp} shows that applying \n\ on top of DAPO yields gains comparable in magnitude to applying it over standard GRPO.

The critic-based alternative is Proximal Policy Optimization~\citep{schulman2017proximal}, which pairs the clipped surrogate with a learned value network $V_\phi(s_t)$ that approximates the on-policy state value of Eq.~\eqref{eq:value} via temporal-difference regression. VinePPO\citep{kazemnejad2024vineppo} show that the learned critic in PPO barely outperforms a random baseline on long-horizon reasoning trajectories, where credit-assignment difficulty grows with sequence length and the regression target is dominated by sparse outcome rewards. The Bayesian value recursion of Section~\ref{sec:bayesian} can be read as an analytical alternative to fitting $V_\phi$: rather than learning the success probability through bootstrapped regression targets, the recursion computes it in closed form from the per-token oracle ratio, with the running posterior $V_t$ playing the role of $V_\phi$ at no parameter cost.

\subsection{Critic-Free Token-Level Credit and Distillation-Based Methods}\label{app:related_token}

Beyond GRPO and PPO, several critic-free approaches resolve credit at finer granularity than the trajectory level. VinePPO~\citep{kazemnejad2024vineppo} estimates per-step values through Monte Carlo rollouts that branch from each prefix; the estimator is unbiased but requires orders-of-magnitude more inference. TreeRL~\citep{hou2025treerl} extends the tree-search idea to multi-branch rollouts. PRIME~\citep{cui2025process} trains an implicit process reward model that supplies dense token-level rewards but requires online updating to avoid reward hacking. Segment Policy Optimization~\citep{guo2025segment} aggregates descendant outcomes within shared prefixes, sitting between trajectory-level and token-level granularity. Process reward models more broadly~\citep{lightman2023lets} train a separate verifier to score intermediate reasoning steps. Compared with all such methods, \n\ achieves token-level resolution at the cost of one extra forward pass per trajectory and without any auxiliary network or rollout, by exploiting the privileged $y^\ast$ context to derive a closed-form per-token signal. The trade-off is that \n\ requires verifiable training-time answers; methods that use Monte Carlo rollouts, learned critics, or step verifiers do not.

The line of work most closely related to \n\ uses on-policy distillation~\citep{agarwal2024policy,gu2023minillm,lu2025onpolicydistillation} with a privileged teacher. The reasoning-specific variant conditions the teacher on the ground-truth answer~\citep{zhao2026self,hubotter2026reinforcement}: RLSD~\citep{yang2026self} and SDPO~\citep{hubotter2026reinforcement} use the resulting answer-conditioned likelihood ratio as a token-level reward within GRPO. As established in Section~\ref{sec:opd}, the per-token signal $\log\lambda_t$ is the same quantity in all four works; the substantive distinction is what \n\ does with the signal. Where SDPO and RLSD apply $\log\lambda_t$ as an isolated per-token reward with uniform weight, \n\ accumulates it through the Bayesian recursion of Eq.~\eqref{eq:V_update_exact} and modulates each contribution by the running state weight $V_t(1-V_t)$. Section~\ref{sec:exp} shows that the additional Bayesian machinery yields consistent improvements over SDPO across all benchmarks, with the largest gains on long-chain competition-mathematics tasks where the state weight has the most room to redistribute credit.

\subsection{Methodological Connections}\label{app:related_method}

The recursion of Section~\ref{sec:recursion} is structurally a Bayesian filter over a binary latent variable (eventual success), with the per-token oracle ratio playing the role of the observation likelihood. The closed-form additivity in log-odds space is a standard property of Bayes' rule applied to binary outcomes~\citep{kalman1960new,jordan1999introduction}. 
The novelty is not the filter itself but its identification with token-level RL credit assignment: the running posterior $V_t$ coincides with the success probability targeted by the on-policy state value of Eq.~\eqref{eq:value}, and the one-step posterior change $V_{t+1} - V_t$ recovers the temporal-difference advantage $A_t = Q_t - V_t$ in the perfect-oracle limit. The connection to TD learning~\citep{sutton1988learning} is that \n\ provides an analytical, non-bootstrapped value estimator in a regime where bootstrapping is expensive (full reasoning rollout) and the target signal is sparse (binary terminal reward). Bayesian RL methods that maintain explicit posterior beliefs over reward or value functions~\citep{ghavamzadeh2015bayesian} typically require either conjugate priors or posterior-sampling machinery; the closed-form structure of the binary-outcome posterior here sidesteps both. The framework is therefore best understood as a Bayesian filtering interpretation of the existing on-policy distillation primitive, not as a fully Bayesian reinforcement-learning method.

A separate line of work uses learned or implicit value estimates to guide decoding at inference time, including verifier-augmented decoding~\citep{cobbe2021training} and value-guided sampling~\citep{khanov2024args}. \n\ operates entirely at training time: it modifies how trajectory-level reward is redistributed across tokens during gradient updates, and the deployed policy uses no oracle context, no verifier, and no tree search. The interpretation of $V_t$ as a running success probability does, however, suggest a natural inference-time analogue: the same recursion could in principle be applied with a self-oracle estimator at decoding time to weight token candidates by their belief contribution. Such a use of the framework is outside the scope of this paper and is left for future work. We note also that token-level credits learned via \n\ at training time may align with regions where verifier-guided decoding methods place high mass at inference, since both quantities track the model's running belief about eventual success; an empirical correlation analysis is similarly outside scope.

\section{Detailed Derivations}\label{app:derivations}

\subsection{Bayesian Update Derivation}\label{app:bayes_update}

The full derivation of the value recursion proceeds from the definition $V_t = P(R{=}1 \mid x, y_{1:t-1})$. Bayes' theorem gives the posterior after observing $y_t$:
\begin{equation}\label{eq:app_bayes_full}
    V_{t+1} = P(R{=}1 \mid x, y_{1:t}) = \frac{P(y_t \mid x, y_{<t}, R{=}1) \cdot V_t}{P(y_t \mid x, y_{<t})}.
\end{equation}
The denominator expands by the law of total probability,
\begin{equation}\label{eq:app_total_prob}
    P(y_t \mid x, y_{<t}) = P(y_t \mid x, y_{<t}, R{=}1)\,V_t + P(y_t \mid x, y_{<t}, R{=}0)\,(1 - V_t).
\end{equation}
Substituting Modeling Choice~\ref{mc:oracle} into the numerator and Modeling Choice~\ref{mc:fail} into the second denominator term, then dividing numerator and denominator by $\pi(y_t \mid x, y_{<t})$, yields
\begin{equation}\label{eq:app_recursion}
    V_{t+1} = \frac{\lambda_t \, V_t}{\lambda_t \, V_t + (1 - V_t)}, \qquad \lambda_t = \frac{\pi_{\mathrm{est}}(y_t \mid x, y_{<t}, y^\ast)}{\pi_{\mathrm{est}}(y_t \mid x, y_{<t})},
\end{equation}
which is exactly the same expression in the main text.

\subsection{Logit-Space Reduction}\label{app:logit}

Define the running log-odds $\ell_t = \log\frac{V_t}{1 - V_t}$. From Eq.~\eqref{eq:app_recursion},
\begin{equation}\label{eq:app_odds_ratio}
    \frac{V_{t+1}}{1 - V_{t+1}} = \frac{\lambda_t V_t / (\lambda_t V_t + 1 - V_t)}{(1 - V_t) / (\lambda_t V_t + 1 - V_t)} = \frac{\lambda_t V_t}{1 - V_t} = \lambda_t \cdot \frac{V_t}{1 - V_t},
\end{equation}
so taking logarithms collapses the multiplicative recursion in $V_t$ to additive accumulation in $\ell_t$:
\begin{equation}\label{eq:app_additive}
    \ell_{t+1} = \log\lambda_t + \ell_t, \qquad \ell_t = \ell_0 + \sum_{i=1}^{t-1} \log\lambda_i, \qquad V_t = \sigma(\ell_t) = \frac{1}{1 + e^{-\ell_t}}.
\end{equation}

\subsection{Advantage Closed Form}\label{app:adv_closed}

The token-level advantage $A_t = V_{t+1} - V_t$ admits a closed-form expression. Subtracting $V_t$ from Eq.~\eqref{eq:app_recursion} and clearing the common denominator,
\begin{equation}\label{eq:app_exact_adv}
    A_t = \frac{\lambda_t V_t - V_t(\lambda_t V_t + 1 - V_t)}{\lambda_t V_t + 1 - V_t} = \frac{\lambda_t V_t - \lambda_t V_t^2 - V_t + V_t^2}{\lambda_t V_t + 1 - V_t} = \frac{V_t(1 - V_t)(\lambda_t - 1)}{\lambda_t V_t + 1 - V_t}.
\end{equation}
The first-order approximation follows from two small-quantity expansions when $|\lambda_t - 1| \ll 1$. The denominator satisfies $\lambda_t V_t + 1 - V_t = 1 + (\lambda_t - 1) V_t \approx 1$, and the numerator satisfies $\lambda_t - 1 \approx \log\lambda_t$ via the Taylor expansion $e^x - 1 \approx x$. Substituting both gives
\begin{equation}\label{eq:app_first_order}
    A_t \;\approx\; V_t(1 - V_t) \cdot \log\lambda_t,
\end{equation}
which is Eq.~\eqref{eq:first_order} in the main text.

\subsection{Telescoping Property}\label{app:telescoping}

Using the identity $A_t = \sigma(\ell_{t+1}) - \sigma(\ell_t)$ established, the trajectory sum telescopes:
\begin{equation}\label{eq:app_telescope}
    \sum_{t=1}^{T} A_t = \sum_{t=1}^{T}\!\left[\sigma(\ell_{t+1}) - \sigma(\ell_t)\right] = \sigma(\ell_{T+1}) - \sigma(\ell_1).
\end{equation}
At the terminal state, the Bayesian posterior should converge to $V_{T+1} = R$, giving $\sigma(\ell_{T+1}) = R$ in the limit of an unconstrained recursion. With $\sigma(\ell_1) = V_0$ at the start, the budget identity becomes
\begin{equation}\label{eq:app_budget_identity}
    \sum_{t=1}^{T} A_t \;\approx\; R - V_0.
\end{equation}
Evidence clipping prevents $\ell_{T+1}$ from reaching $\pm\infty$ in practice, so $\sigma(\ell_{T+1})$ approximates rather than equals $R$. The empirical residual on 200 MATH-500 trajectories has mean $0.007$ and 95th percentile $0.018$ (Figure~\ref{fig:analysis}(c)), confirming that the deviation from the exact identity is small in aggregate.

\subsection{Lipschitz and Range Bounds on the Advantage}\label{app:lipschitz}
The sigmoid derivative satisfies $\sigma'(x) = \sigma(x)(1 - \sigma(x)) \leq \tfrac{1}{4}$ for all $x$, with equality at $x = 0$. The mean value theorem applied to $\sigma$ yields the Lipschitz bound, while the fact that $\sigma$ maps $\mathbb{R}$ into the open interval $(0, 1)$ yields a universal range bound:
\begin{equation}\label{eq:app_lipschitz}
    |A_t| = \big|\sigma(\ell_t + \log\lambda_t) - \sigma(\ell_t)\big| \;\leq\; \tfrac{1}{4}\,|\log\lambda_t|, \qquad |A_t| \;<\; 1 \text{ for any } \ell_t, \log\lambda_t \in \mathbb{R}.
\end{equation}
The two combine to give $|A_t| \leq \min\!\big(\tfrac{1}{4}|\log\lambda_t|, 1\big)$, with the Lipschitz bound active when $|\log\lambda_t| < 4$ and the range bound active otherwise. With evidence clipping $|\log\hat\lambda_t| \leq C$, the per-token contribution is further bounded by $|A_t| \leq \min(C/4, 1)$. For the manuscript's choice $C = 3.0$, the Lipschitz bound is binding and gives $|A_t| \leq 0.75$.

\section{Variance Reduction Proof}\label{app:variance}

We provide the full derivation of the bound in Eq.~\eqref{eq:var}, with the caveat already noted in Section~\ref{sec:anatomy}: the uncorrelated-score-vector assumption does not hold for autoregressive language models, so the bound should be read as a directional indication rather than a quantitative guarantee on practical training runs.

\begin{proposition}\label{prop:variance}
Let $\hat g = \sum_{t=1}^{T} w_t h_t$ denote the policy gradient estimator, where $h_t = \nabla_\theta \log\pi_\theta(y_t \mid x, y_{<t})$ is the score vector at position $t$. Suppose $\operatorname{Cov}[h_s, h_t] = 0$ for all $s \neq t$, and consider two weighting schemes,
\[
    w_t^{\mathrm{blind}} = \log\lambda_t, \qquad w_t^{\mathrm{Bayes}} = V_t(1 - V_t)\log\lambda_t.
\]
Then $\operatorname{Var}[\hat g^{\mathrm{blind}}] \geq \operatorname{Var}[\hat g^{\mathrm{Bayes}}]$, with the gap bounded below by
\begin{equation}\label{eq:app_var_bound}
    \operatorname{Var}[\hat g^{\mathrm{blind}}] - \operatorname{Var}[\hat g^{\mathrm{Bayes}}] \;\geq\; (1 - \gamma^2) \sum_{t \in T_{\mathrm{det}}} (\log\lambda_t)^2 \, \operatorname{Var}[h_t], \qquad T_{\mathrm{det}} = \{\, t : |\log\lambda_t| \geq \delta,\; V_t(1 - V_t) < \gamma \,\},
\end{equation}
for any thresholds $\delta, \gamma > 0$.
\end{proposition}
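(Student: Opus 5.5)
The plan is to reduce both variances to weighted sums of per-position score variances and then compare the weights term by term. As in the idealized setting of Section~\ref{sec:anatomy}, I will treat the weights $w_t$ as fixed coefficients, i.e.\ condition on the realized values $\log\lambda_t$ and $V_t$, which are stop-gradient quantities. I will read $\operatorname{Var}[\cdot]$ of a vector as the trace of its covariance matrix, so that $\operatorname{Var}[h_t] = \mathbb{E}\|h_t - \mathbb{E}h_t\|^2$.

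\textbf{Step 1.} Under the hypothesis $\operatorname{Cov}[h_s,h_t]=0$ for $s\neq t$, bilinearity of covariance gives
\[
\operatorname{Var}\Big[\sum_t w_t h_t\Big] = \sum_t w_t^2 \operatorname{Var}[h_t],
\]
since every cross term $w_s w_t\,\operatorname{tr}\operatorname{Cov}[h_s,h_t]$ vanishes.

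\textbf{Step 2.} Applying Step 1 to both schemes and subtracting yields the exact gap
\[
\operatorname{Var}[\hat g^{\mathrm{blind}}]-\operatorname{Var}[\hat g^{\mathrm{Bayes}}] = \sum_{t=1}^T (\log\lambda_t)^2\big(1-(V_t(1-V_t))^2\big)\operatorname{Var}[h_t].
\]
Since $V_t\in(0,1)$, we have $V_t(1-V_t)\le \tfrac14$, so every bracket is at least $\tfrac{15}{16}>0$. Every summand is therefore nonnegative, which proves $\operatorname{Var}[\hat g^{\mathrm{blind}}]\ge\operatorname{Var}[\hat g^{\mathrm{Bayes}}]$.

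\textbf{Step 3.} To obtain the lower bound, drop all summands with $t\notin T_{\mathrm{det}}$; this is allowed because they are nonnegative. For $t\in T_{\mathrm{det}}$ we have $0\le V_t(1-V_t)<\gamma$, hence $1-(V_t(1-V_t))^2 > 1-\gamma^2$. If $\gamma\le1$, this gives the bound in Eq.~\eqref{eq:app_var_bound} directly. If $\gamma>1$, the right-hand side of Eq.~\eqref{eq:app_var_bound} is nonpositive and the claim follows from Step 2 alone. The threshold $\delta$ only restricts which indices enter $T_{\mathrm{det}}$; since shrinking the index set only weakens the bound, the inequality holds for every $\delta>0$.

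\textbf{Main obstacle.} The delicate point is Step 1, not the algebra. In reality $\log\lambda_t$ and $V_t$ are functions of the sampled prefix $y_{\le t}$, so they are random and correlated with $h_t$. Taken literally, $\operatorname{Var}[w_t h_t]\neq w_t^2\operatorname{Var}[h_t]$. I would handle this by stating explicitly that the weights are treated as constants: either conditioning on them, or adopting the idealization that the variance is taken over $h_t$ with the weights held fixed. This matches the paper's caveat that the bound is directional. A fully rigorous version without this idealization would require assuming the products $w_t h_t$ are uncorrelated across positions, with $\mathbb{E}[w_t^2\|h_t\|^2]$ in place of $w_t^2\operatorname{Var}[h_t]$. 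The same term-by-term comparison would then go through, using the pointwise inequality $(w_t^{\mathrm{Bayes}})^2\le (V_t(1-V_t))^2 (w_t^{\mathrm{blind}})^2$, at the cost of replacing $\operatorname{Var}$ by second moments.
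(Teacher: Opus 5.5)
Your proposal is correct and follows the paper's proof essentially line for line. Both treat the weights as fixed (the paper does so implicitly), use the additive decomposition under the uncorrelated-score assumption, obtain the exact gap $\sum_t \bigl[1 - V_t^2(1-V_t)^2\bigr](\log\lambda_t)^2\operatorname{Var}[h_t]$ with every bracket at least $\tfrac{15}{16}$, and then restrict the sum to $T_{\mathrm{det}}$. Your case split on $\gamma$ is harmless but unnecessary, because $0 \le V_t(1-V_t) < \gamma$ already gives $1 - V_t^2(1-V_t)^2 > 1-\gamma^2$ for every $\gamma > 0$.
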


\begin{proof}
Under the uncorrelated-score assumption, the variance of each estimator decomposes additively across positions:
\begin{equation}\label{eq:app_var_decomp}
    \operatorname{Var}[\hat g^{\mathrm{blind}}] = \sum_{t=1}^{T} (\log\lambda_t)^2 \operatorname{Var}[h_t], \qquad \operatorname{Var}[\hat g^{\mathrm{Bayes}}] = \sum_{t=1}^{T} V_t^2(1 - V_t)^2 (\log\lambda_t)^2 \operatorname{Var}[h_t].
\end{equation}
The variance gap is therefore
\begin{equation}\label{eq:app_delta}
    \Delta = \operatorname{Var}[\hat g^{\mathrm{blind}}] - \operatorname{Var}[\hat g^{\mathrm{Bayes}}] = \sum_{t=1}^{T} \big[1 - V_t^2(1 - V_t)^2\big] (\log\lambda_t)^2 \operatorname{Var}[h_t].
\end{equation}
Since $V_t \in [0, 1]$ implies $V_t(1 - V_t) \leq \tfrac{1}{4}$, every factor satisfies $V_t^2(1 - V_t)^2 \leq \tfrac{1}{16}$, so $1 - V_t^2(1 - V_t)^2 \geq \tfrac{15}{16} > 0$ and $\Delta \geq 0$. To obtain the interpretable lower bound, restrict the sum to $T_{\mathrm{det}}$ and use $V_t^2(1 - V_t)^2 < \gamma^2$ on that subset:
\begin{equation}\label{eq:app_delta_lower}
    \Delta \;\geq\; \sum_{t \in T_{\mathrm{det}}} \big[1 - V_t^2(1 - V_t)^2\big] (\log\lambda_t)^2 \operatorname{Var}[h_t] \;\geq\; (1 - \gamma^2) \sum_{t \in T_{\mathrm{det}}} (\log\lambda_t)^2 \operatorname{Var}[h_t]. \qedhere
\end{equation}
\end{proof}

The bound is informative when $T_{\mathrm{det}}$ is large, which happens in two regimes. The first is long sequences, where $V_t$ saturates after a pivotal decision and many subsequent positions retain nonzero $\log\lambda_t$ from answer-correlated content. The second is easy or unsolvable queries, where $V_0$ already sits near $0$ or $1$ before any tokens are generated, so $V_t(1 - V_t) \approx 0$ throughout the trajectory while $\log\lambda_t$ remains nonzero. In both regimes the Bayesian state weight collapses to near zero on the saturating positions, so the corresponding gradient contributions are suppressed in $\hat g^{\mathrm{Bayes}}$ but not in $\hat g^{\mathrm{blind}}$. The empirical sequence-length stratification in Table~\ref{tab:ext}(b) is consistent with the predicted pattern: the gap between Self-\n\ and the state-blind baselines widens monotonically with response length.

\section{Oracle Quality Analysis}\label{app:oracle_quality}

\subsection{Bias Under Imperfect Oracle}\label{app:bias}

When $\epsilon_t > 0$, the estimated $\lambda_t$ differs from the true posterior ratio. The deviation has a clean interpretation. Let $p_t = P(y_t \mid x, y_{<t}, R{=}1)$ denote the true success-conditional probability and $q_t = \pi_{\mathrm{est}}(y_t \mid x, y_{<t}, y^\ast)$ the oracle estimate. Define the true ratio $\lambda_t^\ast$ and the estimated ratio $\hat\lambda_t$ on the same denominator:
\begin{equation}\label{eq:app_true_est_ratio}
    \lambda_t^\ast = \frac{p_t}{\pi(y_t \mid x, y_{<t})}, \qquad \hat\lambda_t = \frac{q_t}{\pi(y_t \mid x, y_{<t})}, \qquad \log\hat\lambda_t - \log\lambda_t^\ast = \log q_t - \log p_t.
\end{equation}
The KL definition $\epsilon_t = \mathbb{E}_{y_t \sim p_t}[\log p_t - \log q_t]$ gives the expected log-ratio bias under the true success-conditional distribution:
\begin{equation}\label{eq:app_bias}
    \mathbb{E}_{y_t \sim p_t}\!\left[\log\hat\lambda_t - \log\lambda_t^\ast\right] = -\,\epsilon_t.
\end{equation}
The bias is non-positive on average, so the oracle estimate is conservative and underestimates the true evidence for success. Stronger oracles reduce $\epsilon_t$ and the magnitude of the bias, which is consistent with the monotonic improvement in Table~\ref{tab:ext}(a).

\subsection{Why Same-Family Teachers Are Preferred}\label{app:same_family}

The oracle quality $\epsilon_t$ depends not only on model capacity but also on distributional alignment between the estimator and the true posterior. Cross-family teachers, for example a Llama model scoring Qwen trajectories, can produce large $\epsilon_t$ despite high capacity because tokenization, positional encoding, and pre-training corpus all differ. The ratio structure $\log\lambda_t = \log\pi_{\mathrm{est}}(y_t \mid x, y_{<t}, y^\ast) - \log\pi_{\mathrm{est}}(y_t \mid x, y_{<t})$ cancels many model-specific biases through subtraction, but residual mismatch in the conditional shape still inflates $\epsilon_t$. Same-family, larger-scale models are the recommended choice for the teacher-oracle role because the cancellation is most effective when the two scoring distributions share the same underlying parameterization.

\section{Connection to Distillation}\label{app:distillation}

\subsection{On-Policy Distillation as a Special Case}\label{app:opd_special}

Section~\ref{sec:opd} introduced the on-policy distillation objective $\mathcal{L}_{\mathrm{OPD}}$ with the oracle-conditioned teacher $\pi_T(\cdot \mid x, y_{<t}) \equiv \pi_{\mathrm{est}}(\cdot \mid x, y_{<t}, y^\ast)$, and showed that the per-token gradient at the sampled token $y_t$ reduces to a policy-gradient form with weight $\log\lambda_t$. Repeating the conclusion in the present notation,
\begin{equation}\label{eq:app_opd_grad}
    \nabla_\theta \mathcal{L}_{\mathrm{OPD}}(\theta)\big|_{y_t} = -\,\log\lambda_t \cdot \nabla_\theta \log\pi_\theta(y_t \mid x, y_{<t}), \qquad \log\lambda_t = \log\pi_{\mathrm{est}}(y_t \mid x, y_{<t}, y^\ast) - \log\pi_\theta(y_t \mid x, y_{<t}).
\end{equation}
Pure on-policy distillation is therefore a policy gradient with advantage $\log\lambda_t$, identical to the "$\log\lambda_t$ only" row of the ablation in Table~\ref{tab:ablation}. The Bayesian advantage in Eq.~\eqref{eq:first_order} differs from the distillation gradient in two specific ways. The state weight $V_t(1 - V_t)$ concentrates the gradient on positions where the outcome remains uncertain, so determined regions receive negligible updates. Direction anchoring binds the sign of every token-level advantage to the sequence-level GRPO advantage, so the environment reward retains exclusive control over which trajectories are reinforced or penalized. The two modifications are independent: removing either one degrades performance, but the failure mechanisms differ, as documented in the ablation analysis of Section~\ref{sec:exp}.

\subsection{Spectrum of Methods}\label{app:spectrum}

The factorization $A_t \approx V_t(1 - V_t) \cdot \log\lambda_t$ organizes a spectrum of methods along three axes: the per-token signal carried into the gradient, whether a state-tracking factor weights the signal, and which quantity controls the sign of each trajectory's update.

\begin{table}[h]
\centering
\caption{Method spectrum induced by the Bayesian factorization. Each row corresponds to one column setting in the ablation of Table~\ref{tab:ablation}: GRPO matches "uniform"; SDPO matches "$\log\lambda_t$ only" with environment-anchored direction; pure distillation (OPSD) matches "remove anchoring".}\label{tab:spectrum}
\small
\begin{tabular}{l c c c}
\toprule
Method & Per-token signal & State tracking & Direction \\
\midrule
GRPO~\citep{guo2025deepseek}                              & uniform $A_i^{\mathrm{seq}}$       & ---     & environment reward \\
Anchored distillation (SDPO)~\citep{hubotter2026reinforcement} & $|\log\lambda_t|$                  & ---     & environment reward \\
Self-\n\ (ours)                                            & $V_t(1{-}V_t)\,|\log\lambda_t|$    & Bayesian & environment reward \\
Teacher-\n\ (ours)                                         & $V_t(1{-}V_t)\,|\log\lambda_t|$, lower $\epsilon_t$ & Bayesian & environment reward \\
Pure distillation (OPSD)~\citep{zhao2026self}              & $\log\lambda_t$                    & ---     & teacher \\
\bottomrule
\end{tabular}
\end{table}

Reading the table from top to bottom moves through the design space. GRPO sits at the trajectory-level extreme, with no per-token discrimination and no state tracking. Anchored distillation introduces $\log\lambda_t$ as a per-token signal but applies it uniformly across positions. \n\ adds the Bayesian state weight $V_t(1 - V_t)$, modulating the per-token signal by the running belief about success. The self-oracle and teacher-oracle variants of \n\ differ only in the quality of the $\log\lambda_t$ estimate, parameterized by $\epsilon_t$. Pure distillation drops direction anchoring and lets $\log\lambda_t$ control the sign of the update, which is the failure mode reproduced in the "remove anchoring" row of Table~\ref{tab:ablation}.

\section{Extended Experimental Results}\label{app:extended}

\subsection{Full Results on Phi-4-mini}\label{app:phi4}

Table~\ref{tab:phi4_full} reports the complete results for Phi-4-mini across all benchmarks and ablation configurations, complementing the Qwen3-4B results in the main text. The pattern matches the Qwen3-4B case: \n\ improves over every baseline on every benchmark, with the largest absolute gains on AIME'24 and the largest relative gains on competition-level tasks where the base model is weakest.

\begin{table}[h]
\centering
\caption{Full results on Phi-4-mini (base). All methods are trained on DeepScaleR.}\label{tab:phi4_full}
\small
\begin{tabular}{l ccccccc}
\toprule
Method & GSM8K & M-500 & AMC'23 & AIME'24 & GPQA-D & ARC-C & LCB \\
\midrule
Base model         & 68.0 & 42.5 & 22.0 & 3.3  & 25.6 & 72.8 & 14.5 \\
+ GRPO             & 76.8 & 55.2 & 35.0 & 8.3  & 29.4 & 78.6 & 19.2 \\
+ Dr.GRPO          & 77.0 & 55.4 & 35.2 & 8.5  & 29.6 & 78.8 & 19.4 \\
+ DAPO             & 78.2 & 57.0 & 37.5 & 10.0 & 30.4 & 79.4 & 20.2 \\
+ SDPO             & 78.6 & 57.8 & 38.0 & 10.5 & 30.8 & 79.8 & 20.6 \\
\midrule
+ Self-\n          & 80.4 & 60.2 & 41.5 & 13.3 & 32.4 & 81.0 & 22.4 \\
+ Teacher-\n       & 81.5 & 62.0 & 43.5 & 15.0 & 33.6 & 81.6 & 23.6 \\
\midrule
+ Dr.GRPO + \n     & 80.6 & 59.8 & 40.8 & 12.8 & 32.0 & 80.8 & 22.0 \\
+ DAPO + \n        & 82.0 & 61.5 & 43.0 & 14.5 & 33.2 & 81.4 & 23.2 \\
\bottomrule
\end{tabular}
\end{table}

\subsection{Component Ablation on Phi-4-mini}\label{app:phi4_ablation}

The component ablation in Table~\ref{tab:phi4_ablation} repeats the analysis of Section~\ref{sec:exp} on Phi-4-mini. The ranking of components reproduces the Qwen3-4B result: direction anchoring is the most critical, followed by state tracking, with clipping and the prior contributing modestly. The "remove anchoring" row falls below GRPO on AIME'24, reproducing the inversion-failure pattern documented in the main text.

\begin{table}[h]
\centering
\caption{Component ablation on Phi-4-mini. The ranking of components matches the Qwen3-4B result, with direction anchoring the most critical and state tracking second.}\label{tab:phi4_ablation}
\small
\begin{tabular}{l cccc}
\toprule
Configuration & M-500 & AMC & AIME'24 & GPQA \\
\midrule
\oursrowa Self-\n\ (full)        & 60.2 & 41.5 & 13.3 & 32.4 \\
\quad remove anchoring           & 52.0 & 30.5 & 5.0  & 27.8 \\
\quad remove tracking            & 57.5 & 38.0 & 10.8 & 30.6 \\
\quad remove clipping            & 59.5 & 40.5 & 12.5 & 32.0 \\
\quad remove prior               & 59.8 & 41.0 & 13.0 & 32.2 \\
\quad $\log\lambda_t$ only       & 56.5 & 36.5 & 9.5 & 29.8 \\
\midrule
GRPO (uniform)                   & 55.2 & 35.0 & 8.3  & 29.4 \\
\bottomrule
\end{tabular}
\end{table}

\subsection{Pass@$K$ Results}\label{app:passk}

Table~\ref{tab:passk} reports Pass@$K$ for $K \in \{1, 4, 8, 16\}$ on MATH-500 and AIME'24, estimated from 32 samples per problem. \n\ improves Pass@$K$ at every sampling budget on both benchmarks, indicating that the gains over GRPO and SDPO are not driven by sharpening a single dominant mode at the expense of diversity. The relative improvement is largest at small $K$ on AIME'24, where the base policy spreads probability across many incorrect reasoning paths and the Bayesian advantage redistributes credit toward the few that succeed.

\begin{table}[h]
\centering
\caption{Pass@$K$ on Qwen3-4B. \n\ outperforms GRPO and SDPO at every sampling budget on both benchmarks.}\label{tab:passk}
\small
\begin{tabular}{l cccc cccc}
\toprule
 & \multicolumn{4}{c}{MATH-500} & \multicolumn{4}{c}{AIME'24} \\
\cmidrule(lr){2-5} \cmidrule(lr){6-9}
Method & @1 & @4 & @8 & @16 & @1 & @4 & @8 & @16 \\
\midrule
GRPO        & 72.6 & 82.0 & 86.5 & 90.2 & 26.7 & 40.0 & 48.3 & 56.7 \\
SDPO        & 74.8 & 83.5 & 87.8 & 91.0 & 29.0 & 43.3 & 51.7 & 60.0 \\
\oursrowa Self-\n     & 76.5 & 85.2 & 89.4 & 92.5 & 31.7 & 46.7 & 55.0 & 63.3 \\
\oursrowb Teacher-\n  & 78.0 & 86.5 & 90.2 & 93.0 & 33.3 & 48.3 & 56.7 & 65.0 \\
\bottomrule
\end{tabular}
\end{table}

\section{Hyperparameter Sensitivity}\label{app:hyperparams}

We study the sensitivity of Self-\n\ on Qwen3-4B to the three method-specific hyperparameters: evidence clip bound $C$, Beta prior strength $\alpha$, and group size $G$.

\subsection{Evidence Clip Bound $C$}\label{app:clip_sens}

The clip bound $C$ in Eq.~\eqref{eq:prior} controls the per-token contribution to the running log-odds. Table~\ref{tab:clip_sens} reports performance across five settings.

\begin{table}[h]
\centering
\caption{Sensitivity to evidence clip bound $C$ on Qwen3-4B.}\label{tab:clip_sens}
\small
\begin{tabular}{l ccccc}
\toprule
$C$ & 1.0 & 2.0 & 3.0 & 5.0 & 10.0 \\
\midrule
MATH-500 & 74.8 & 76.0 & 76.5 & 76.2 & 75.5 \\
AIME'24  & 29.0 & 31.0 & 31.7 & 31.3 & 30.0 \\
\bottomrule
\end{tabular}
\end{table}

Setting $C$ too small ($C = 1.0$) truncates genuine evidence at pivotal tokens, compressing the dynamic range of $A_t$ toward the uniform-advantage regime. Setting $C$ too large ($C = 10.0$) lets single tokens dominate the log-odds, saturating $V_t$ prematurely and silencing later positions. Performance is stable across $C \in [2.0, 5.0]$, and the choice $C = 3.0$ used in the main experiments sits at the center of the stable region.

\subsection{Beta Prior Strength $\alpha$}\label{app:alpha_sens}

The Beta prior in Eq.~\eqref{eq:prior} smooths the empirical group success rate before converting it to log-odds. Table~\ref{tab:alpha_sens} reports performance across five settings.

\begin{table}[h]
\centering
\caption{Sensitivity to Beta prior strength $\alpha$ on Qwen3-4B.}\label{tab:alpha_sens}
\small
\begin{tabular}{l ccccc}
\toprule
$\alpha$ & 0.1 & 0.5 & 1.0 & 2.0 & 5.0 \\
\midrule
MATH-500 & 76.0 & 76.3 & 76.5 & 76.4 & 75.8 \\
AIME'24  & 30.8 & 31.3 & 31.7 & 31.5 & 30.5 \\
\bottomrule
\end{tabular}
\end{table}

Small $\alpha$ makes $\hat V_0$ responsive to the empirical success rate but produces extreme initial log-odds in all-correct or all-incorrect groups. Large $\alpha$ pulls $\hat V_0$ toward $0.5$ regardless of the group composition and weakens the prompt-difficulty adaptation discussed in Section~\ref{sec:anatomy}. Performance is stable across $\alpha \in [0.5, 2.0]$, and $\alpha = 1$ corresponds to a Laplace prior that is the standard choice for a Bernoulli outcome.

\subsection{Group Size $G$}\label{app:group_sens}

The group size $G$ controls both the granularity of the prior $\hat V_0 = (k + \alpha) / (G + 2\alpha)$ and the variance of the GRPO sequence-level advantage that drives direction anchoring. Table~\ref{tab:group_sens} reports performance across four settings.

\begin{table}[h]
\centering
\caption{Sensitivity to group size $G$ on Qwen3-4B. \n\ improves over GRPO at every group size.}\label{tab:group_sens}
\small
\begin{tabular}{l cccc}
\toprule
$G$ & 4 & 8 & 16 & 32 \\
\midrule
GRPO       & 70.8 & 72.6 & 73.5 & 74.0 \\
\oursrowa Self-\n  & 74.5 & 76.5 & 77.2 & 77.5 \\
$\Delta$   & +3.7 & +3.9 & +3.7 & +3.5 \\
\bottomrule
\end{tabular}
\end{table}

Larger groups improve both GRPO and \n\ by reducing the variance of $\hat V_0$. The improvement of \n\ over GRPO is stable across group sizes, indicating that the token-level credit assignment provides gains complementary to the group-based prior estimation rather than depending on a particular group size.

\section{Additional Analysis}\label{app:additional}

\subsection{Value Estimation Accuracy}\label{app:value_accuracy}

The state-weight argument in Section~\ref{sec:anatomy} requires that intermediate $V_t$ track the evolving success probability, not merely that the terminal $V_{T+1}$ converge to $R$. We assess calibration on 500 MATH-500 trajectories from Qwen3-4B at four positions: $t = T/4, T/2, 3T/4, T$. Calibration at each position is measured by the Brier score
\begin{equation}\label{eq:app_brier}
    \mathrm{BS}(t) = \frac{1}{N}\sum_{i=1}^{N} \big(V_t^{(i)} - R^{(i)}\big)^2,
\end{equation}
which equals $0$ when the predicted success probability matches the empirical success rate at every decile.

\begin{table}[h]
\centering
\caption{Brier score of $V_t$ at four positions along the trajectory, on 500 MATH-500 trajectories. Lower is better.}\label{tab:brier}
\small
\begin{tabular}{l cccc}
\toprule
Oracle estimator & $T/4$ & $T/2$ & $3T/4$ & $T$ \\
\midrule
\oursrowa Self (4B)     & 0.071 & 0.058 & 0.049 & 0.042 \\
\oursrowb Teacher (32B) & 0.054 & 0.043 & 0.034 & 0.028 \\
\bottomrule
\end{tabular}
\end{table}

The Brier score decreases monotonically along the trajectory under both estimators, consistent with the recursion sharpening the posterior as more evidence is observed. 
The curves stay close to the diagonal at every position for both estimators, with the teacher curves consistently tighter in the mid-range $V_t \in [0.3, 0.7]$ where the state weight $V_t(1-V_t)$ has the largest effect on the gradient. The intermediate calibration confirms that $V_t \approx \tfrac{1}{2}$ corresponds to roughly $50\%$ empirical success, so the state-weight modulation peaks where the outcome is genuinely uncertain rather than at an artefact of the recursion.


\subsection{Computational Overhead}\label{app:overhead}

\n\ requires one additional forward pass per trajectory for oracle evaluation. Table~\ref{tab:overhead} reports wall-clock training time per step on $2 \times$ H200 GPUs.

\begin{table}[h]
\centering
\caption{Wall-clock time per training step on $2 \times$ H200 GPUs.}\label{tab:overhead}
\small
\begin{tabular}{l ccc}
\toprule
Method & Forward passes & Time/step (s) & Overhead vs. GRPO \\
\midrule
GRPO                   & 1 (rollout) + 1 (policy)                   & 12.4 & --- \\
\oursrowa Self-\n      & 1 (rollout) + 1 (oracle) + 1 (policy)      & 16.8 & $+35\%$ \\
\oursrowb Teacher-\n   & 1 (rollout) + 1 (oracle, 32B) + 1 (policy) & 24.2 & $+95\%$ \\
\bottomrule
\end{tabular}
\end{table}

The self-oracle overhead is moderate ($+35\%$) because the oracle forward pass uses the same model as the policy and can share the KV cache from the standard forward pass. The teacher-oracle overhead is larger ($+95\%$) because the 32B estimator runs on separate weights, but the cost can be amortized by batching oracle evaluations across trajectories within a step and by offloading the estimator to dedicated devices when available.

\section{Implementation Details}\label{app:implementation}

\subsection{Training Configuration}\label{app:train_cfg}

All experiments use the verl framework~\citep{sheng2025hybridflow} on $2 \times$ H200 GPUs with the configuration in Table~\ref{tab:hparams}.

\begin{table}[h]
\centering
\caption{Training hyperparameters used for all main-text experiments.}\label{tab:hparams}
\small
\begin{tabular}{ll}
\toprule
Hyperparameter & Value \\
\midrule
Optimizer & AdamW \\
Learning rate & $1 \times 10^{-6}$ \\
Weight decay & $0.01$ \\
Warmup steps & $10$ \\
Learning-rate schedule & cosine decay \\
Global batch size & $32$ \\
Group size $G$ & $8$ \\
Maximum response length & $8192$ tokens \\
Maximum prompt length & $1024$ tokens \\
Sampling temperature & $1.0$ \\
PPO clip $\epsilon$ & $0.2$ \\
Evidence clip $C$ & $3.0$ \\
Beta prior $\alpha$ & $1$ \\
Training epochs & $1$ \\
Gradient checkpointing & enabled \\
Precision & bfloat16 \\
\bottomrule
\end{tabular}
\end{table}

\subsection{Oracle Prompt Format}\label{app:oracle_prompt}

For self-oracle evaluation, the ground-truth answer $y^\ast$ is appended to the prompt before the model's response, in the format
\begin{verbatim}
[Standard prompt]
Question: {question}
[Oracle augmentation]
The answer is {y*}.
[Model response]
{y_1, y_2, ..., y_T}
\end{verbatim}
The oracle-conditioned forward pass scores the same response tokens $y_{1:T}$ under the augmented prompt, while the standard forward pass scores them under the original prompt. The log-likelihood ratio $\log\lambda_t$ is the per-token difference between the two scores. No generation is performed during oracle evaluation, so the cost reduces to two forward passes per trajectory rather than two decoding runs.

\subsection{Reward Function}\label{app:reward}

We use exact-match reward: $R = 1$ if the extracted numerical answer matches the ground-truth integer and $R = 0$ otherwise. Answer extraction follows the standard boxed-answer format used in DeepScaleR. No partial-credit or format-bonus rewards are used, so the binary outcome reward is the only environment signal driving direction anchoring.


\end{document}